\newtheorem{theorem}{Theorem}[section]
\newtheorem{prop}[theorem]{Proposition}
\newtheorem{problem}{Problem}
\newtheorem{definition}[theorem]{Definition}
\newtheorem{rem}[theorem]{Remark}
\newtheorem{ex}[theorem]{Example}
\begin{document}
\title{\LARGE \bf Resilient Temporal Logic Planning in the Presence of Robot Failures}
\author{Samarth Kalluraya$^1$, George J. Pappas$^2$, Yiannis Kantaros$^1$
\thanks{$^{1}$Authors are with the Department of Electrical and Systems Engineering, Washington University at St. Louis, St. Louis, MO, 63130, USA. 
        {\tt\small {k.samarth,ioannisk@wustl.edu}}, $^{2}$Author is with the Department of Electrical and Systems Engineering, University of Pennsylvania, Philadelphia, PA, 19104, USA. 
        {\tt\small pappasg@seas.upenn.edu}. This work was supported by the ARL grant DCIST CRA W911NF-17-2-0181.}%
}

\maketitle 
\begin{abstract}
Several task and motion planning algorithms have been proposed recently to design paths for mobile robot teams with collaborative high-level missions specified using formal languages, such as Linear Temporal Logic (LTL). However, the designed
paths often lack reactivity to failures of robot capabilities (e.g., sensing, mobility, or manipulation) that can occur due to unanticipated events (e.g., human intervention or system malfunctioning) which in turn may compromise mission performance. 
To address this novel challenge, in this paper, we propose a new resilient mission planning algorithm for teams of heterogeneous robots with collaborative LTL missions. The robots are heterogeneous with respect to their capabilities while the mission requires applications of these skills at certain areas in the environment in a temporal/logical order. The proposed method designs paths that can adapt to unexpected failures of robot capabilities. This is accomplished by re-allocating sub-tasks to the robots based on their currently functioning skills while minimally disrupting the existing team motion plans. 
We provide experiments and theoretical guarantees demonstrating the efficiency and resiliency of the proposed algorithm. 
\end{abstract}
\IEEEpeerreviewmaketitle
   
\section{Introduction} \label{sec:Intro}
\textcolor{black}{
Linear Temporal Logic (LTL) has been widely used in robot motion planning to define diverse missions beyond simple reach-avoid requirements \cite{baier2008principles}, 
such as surveillance, cooperative manipulation, and delivery \cite{aksaray2015distributed,verginis2022control,asarkaya2021temporal}.
Planning algorithms with temporal logic missions have been proposed in \cite{luo2021abstraction,kantaros2020stylus,wongpiromsarn2012receding,vasile2013sampling,kloetzer2008fully,gujarathi2022mt}, focusing on robot teams with known dynamics in known environments. Recent extensions of these works have addressed unknown and dynamic environments while assuming known system dynamics \cite{guo2013revising,kantaros2020reactive,Kalluraya2023multi,lahijanian2016iterative,li2022online}. The latter assumption has been relaxed in \cite{hasanbeig2019reinforcement,cai2023learning,sun2022neurosymbolic} using machine learning methods. 
These works lack resiliency to robot failures since they assume that the robot's capabilities remain uncompromised during deployment. However, this assumption may be violated in practice due to various sources of uncertainty robots face, including inclement weather, human interventions, and component malfunctions.}

In order to improve resiliency in mission planning,
we propose a new resilient mission planning algorithm for teams of heterogeneous robots with collaborative missions expressed as LTL formulas. The robots are heterogeneous with respect to their capabilities which may include e.g., mobility, sensing, or manipulation while the LTL formula requires them to apply their capabilities at certain areas and/or objects. 
Given an LTL mission 
the proposed algorithm designs resilient plans in the sense that they adapt to robot failures that include loss of capabilities (e.g., grasping) or complete removal of the robot (e.g., due to battery draining). 
The plans are defined as sequences of robot locations and actions. This is accomplished by re-allocating sub-tasks to the robots based on their currently functioning (if any) skills. 
%
Once the tasks are re-assigned, the previously designed paths are minimally revised to adapt to the capability failures. The proposed method aims to minimally disrupt the multi-robot behavior when failures occur, by minimizing the number of re-assignments and by avoiding re-planning for the whole team (if possible). The latter is particularly important as global task re-allocation and re-planning from scratch would be computationally prohibitive and impractical to perform at runtime, especially for large robot teams. 
%
%
We provide extensive experiments with teams of heterogeneous ground and aerial robots as well as theoretical analysis demonstrating the efficiency and resiliency of the proposed algorithm against multiple unexpected failures of robot capabilities. 

\textbf{Related works:} Several task allocation methods have been proposed recently that assign either local LTL tasks \cite{schillinger2016decomposition,banks2020multi} or sub-tasks (i.e., atomic propositions) of a global collaborative LTL mission \cite{Luo2022temporal} to robots. In these works, task assignment is performed offline, i.e., before robot deployment, while robot failures during the mission are not considered. When robot failures occur, these works can be used online to globally re-allocate tasks to the robots.
However, as discussed earlier, global task re-assignment at runtime is impractical. The closest works to ours are the ones proposed in \cite{Feifei2022failure,Zhou2022Reactive,Faruq2018Simultaneous}. Particularly, \cite{Feifei2022failure} builds a product automaton modeling the multi-robot state space, the specification space, as well as possible robot failures. Using this product system,  control strategies that are reactive to failures can be extracted. Nevertheless, that work considers homogeneous robots while product-based methods lack scalability with respect to the number of robots, the size of the environment, task complexity, and number of failures. Conceptually similar approaches are proposed in \cite{Zhou2022Reactive,Faruq2018Simultaneous}. For instance, \cite{Zhou2022Reactive} considers robots with heterogeneous abilities where robots locally react to environmental and robot state changes. However, unlike our work, in case of failures of robot skills, \cite{Zhou2022Reactive} requires global task re-allocation and re-planning over a team automaton. 

\textbf{Contributions:} 
\textit{First}, we propose a resilient temporal logic mission planning algorithm for heterogeneous robot teams against robot failures. \textit{Second}, the proposed algorithm aims to minimally disrupt the multi-robot behavior in case of failures as it avoids global re-assignment/re-planning. \textit{Third}, we provide correctness, completeness, and optimality guarantees of the proposed method.
\textit{Fourth}, we provide experiments demonstrating the efficiency of our algorithm. 

\section{Problem Definition} \label{sec:PF}
\subsection{Modeling of Robots and Environment}\label{sec:PFmodelRobot}
Consider a team of $N>0$ mobile robots governed by the following dynamics: $\bbp_{j}(t+1)=\bbf_j(\bbp_{j}(t),\bbu_{j}(t))$, for all $j\in \ccalR=\{1,\dots,N\}$,  where $\bbp_{j}(t)\in\mathbb{R}^n$ stands for the state (e.g., position and orientation) of robot $j$ at discrete time $t$, and $\bbu_{j}(t)\in\mathbb{R}^b$ stands for control input. 
Hereafter, we compactly denote the dynamics of all robots as: $\bbp(t+1)=\bbf(\bbp(t),\bbu(t))$, 
%
where $\bbp(t)\in \mathbb{R}^{nN}$, $\forall t\geq 0$, and $\bbu(t)\in \mathbb{R}^{bN}$. We assume that the robot state $\bbp(t)$ is known for all time instants $t\geq0$. The robots reside in a known environment $\Omega\subseteq\mathbb{R}^d$, $d\in\{2,3\}$ with obstacle-free space denoted by $\Omega_{\text{free}}\subseteq\Omega$. We assume that $\Omega_{\text{free}}$ is populated with $M>0$ regions/objects of interests, denoted by $\ell_e$, with known locations $\bbx_e\in\Omega_{\text{free}}$, $e\in\{1,\dots,M\}$.


\subsection{Heterogeneous Robot 
Abilities and Robot Failures}\label{sec:abilities_def}
We consider robots that are heterogeneous with respect to their skills. The robots have collectively $C>0$ number of abilities amongst themselves. Each ability is represented by $c\in\{1,\dots,C\}$. For instance, $c$ can represent mobility, manipulation, fire extinguishing, or various sensing skills. 
We define the set $\ccalC=[1,\dots,c,\dots,C]$ collecting all robot capabilities. 
The skills of robot $j$ are represented by a $\bbZ_j(t)=[\zeta_1^j(t),\dots,\zeta_c^j(t),\dots,\zeta_C^j(t)]$, where $\zeta_c^j(t)$ is either equal to $1$ if robot $j$ has the ability $c$ at time $t$ and $0$ otherwise. Observe that the set of skills for the robots depends on time. 
A failure of capability $c$ for robot $j$ occurs at time $t$ if $\zeta_c^j(t-1)=1$ and $\zeta_c^j(t)=0$. We model removals of robots by setting $\bbZ_j(t)$ to be a zero vector.
%
The failures can occur at unknown time instants $t$ but we assume that vectors $\bbZ_j(t)$ are known at each time $t\geq 0$, for all robots $j$. In other words, we assume that the robots are equipped with a health monitoring system that allows them to keep track of their active/inactive abilities. Also, we assume all-to-all communication. This ensures that the team is aware of any capability failure.
\textcolor{black}{For simplicity, we assume that robots cannot apply more than one skill at a time. 
}
%
%
Based on the individual robot abilities, we partition the multi-robot system into $C$ teams. We define a robot team $\ccalT_c(t)$ at time $t$ as a set collecting the robots with $\zeta_c^j(t) = 1$
%
i.e., $\ccalT_c(t)=\{j\in \ccalR~|~\zeta_c^j(t) = 1 \}$; a robot may belong to more than one team at a time. 

\begin{figure}[t]
  \centering
    \subfigure[Starting positions]{
        \label{fig:3ra}
        \includegraphics[width=0.41\linewidth]{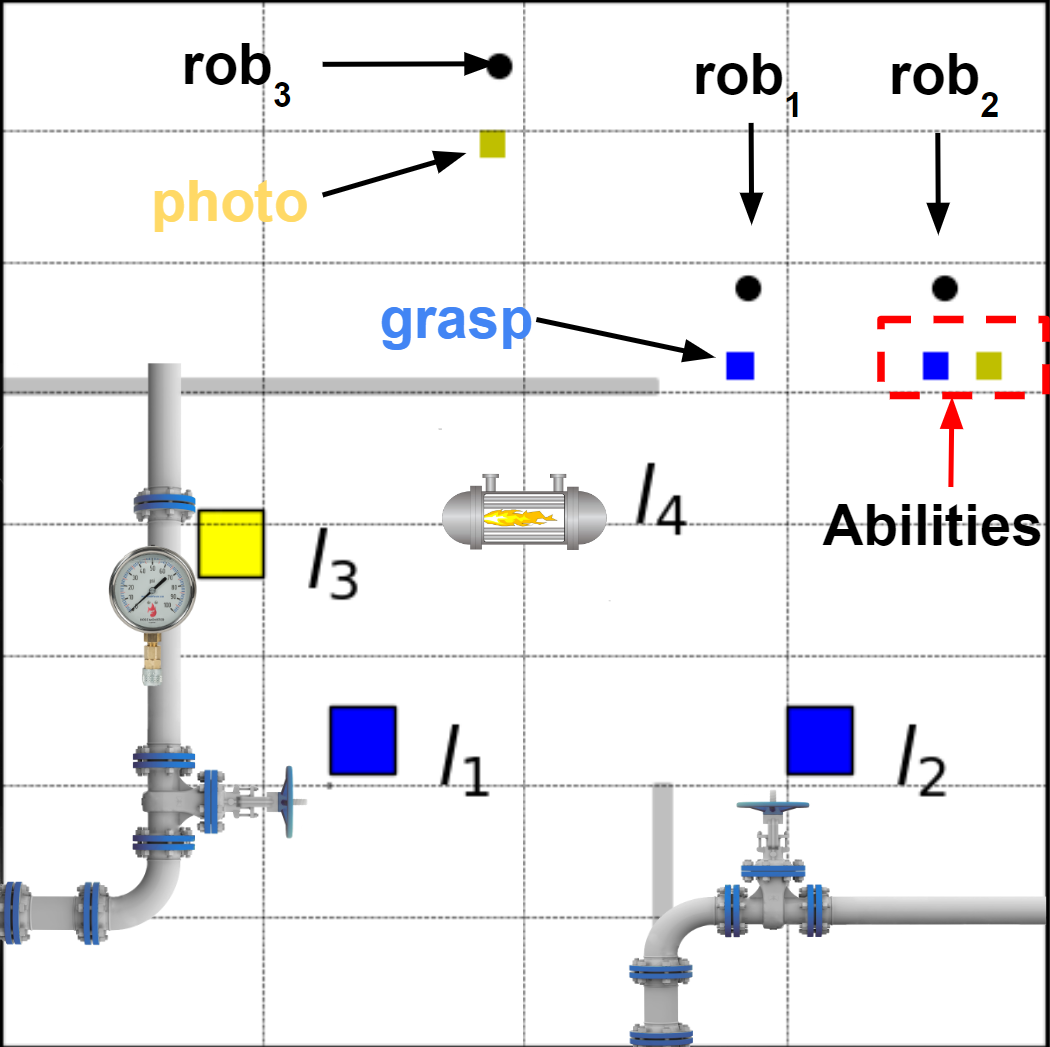}}
    \subfigure[Offline-designed paths]{      
        \label{fig:3rb}
        \includegraphics[width=0.41\linewidth]{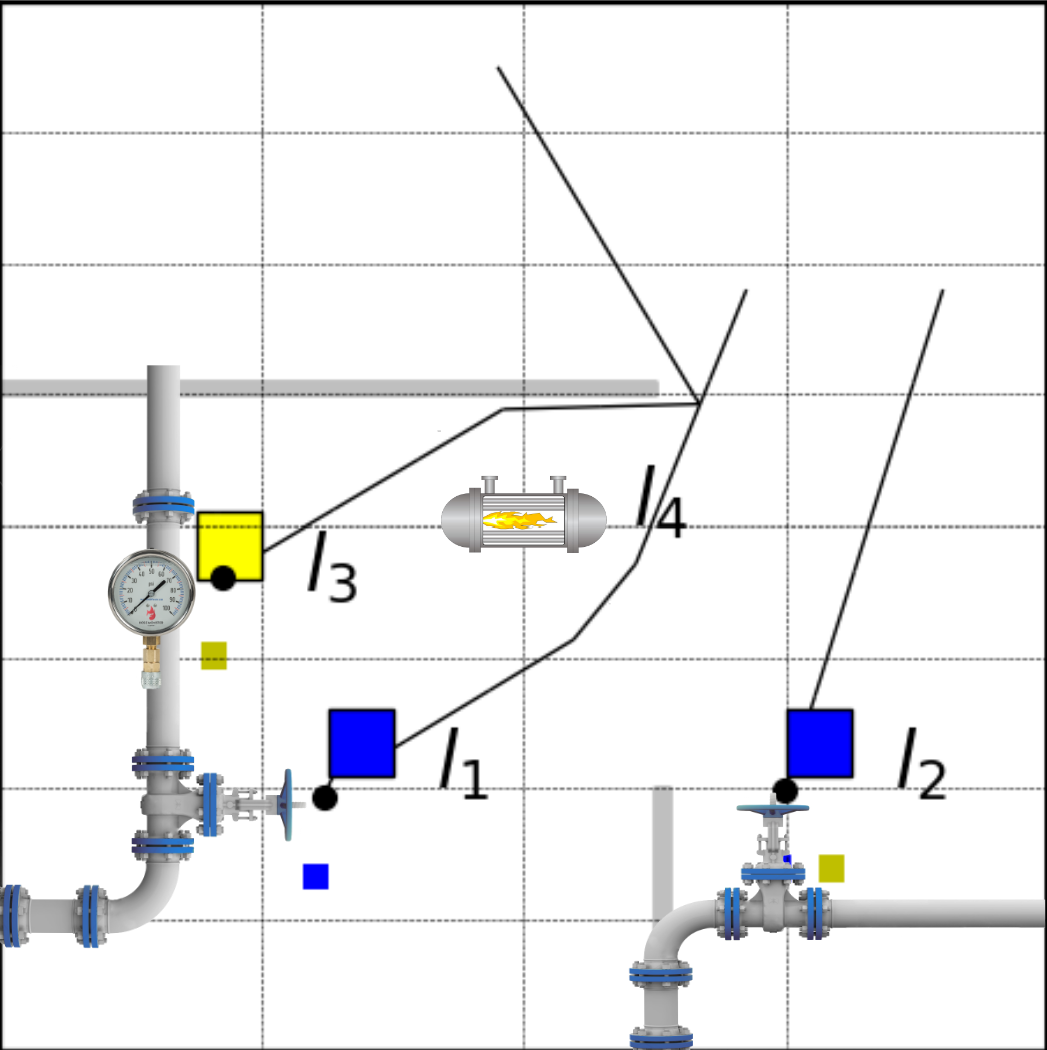}}
    \subfigure[Photo-taking skill failure]{
        \label{fig:3rc}
        \includegraphics[width=0.41\linewidth]{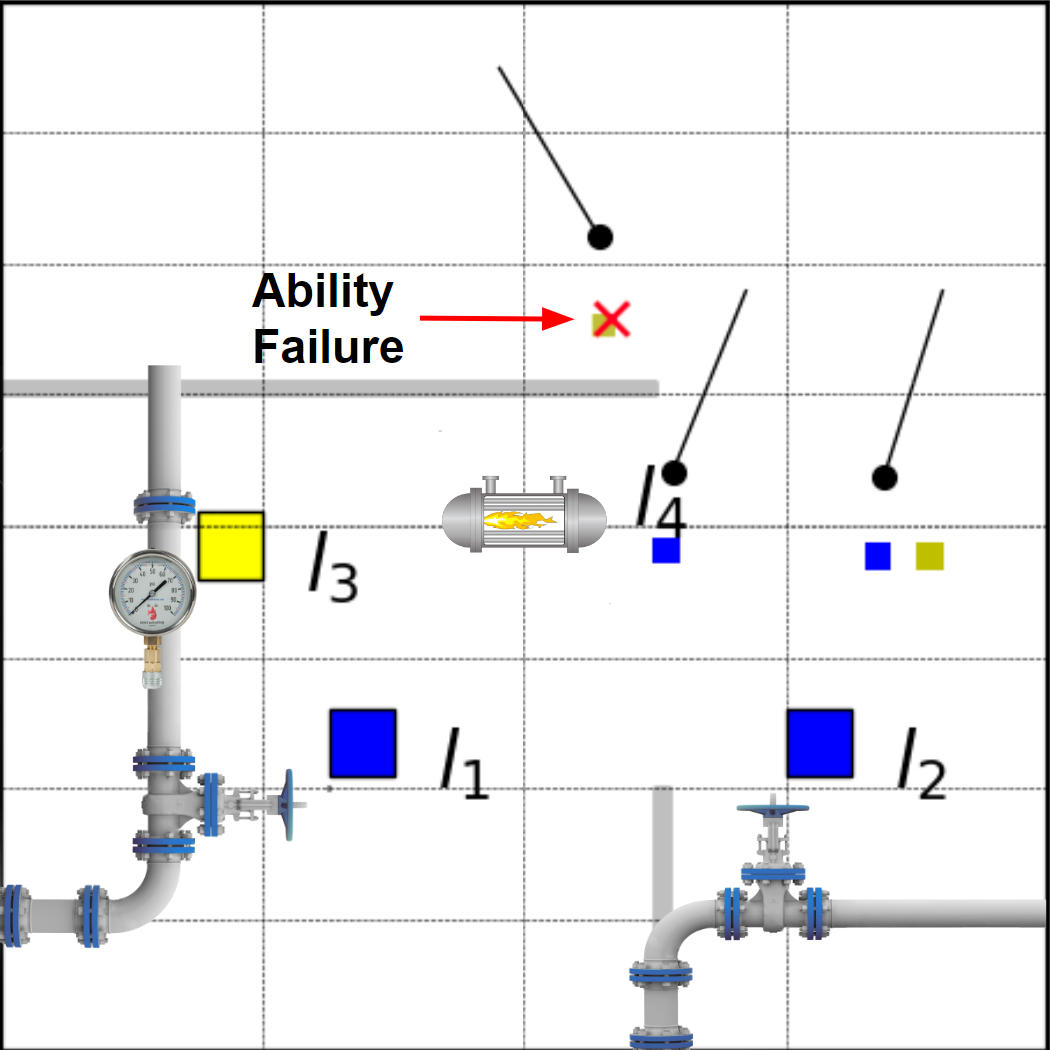}}
    \subfigure[Online Revised paths]{
        \label{fig:3rd}
        \includegraphics[width=0.41\linewidth]{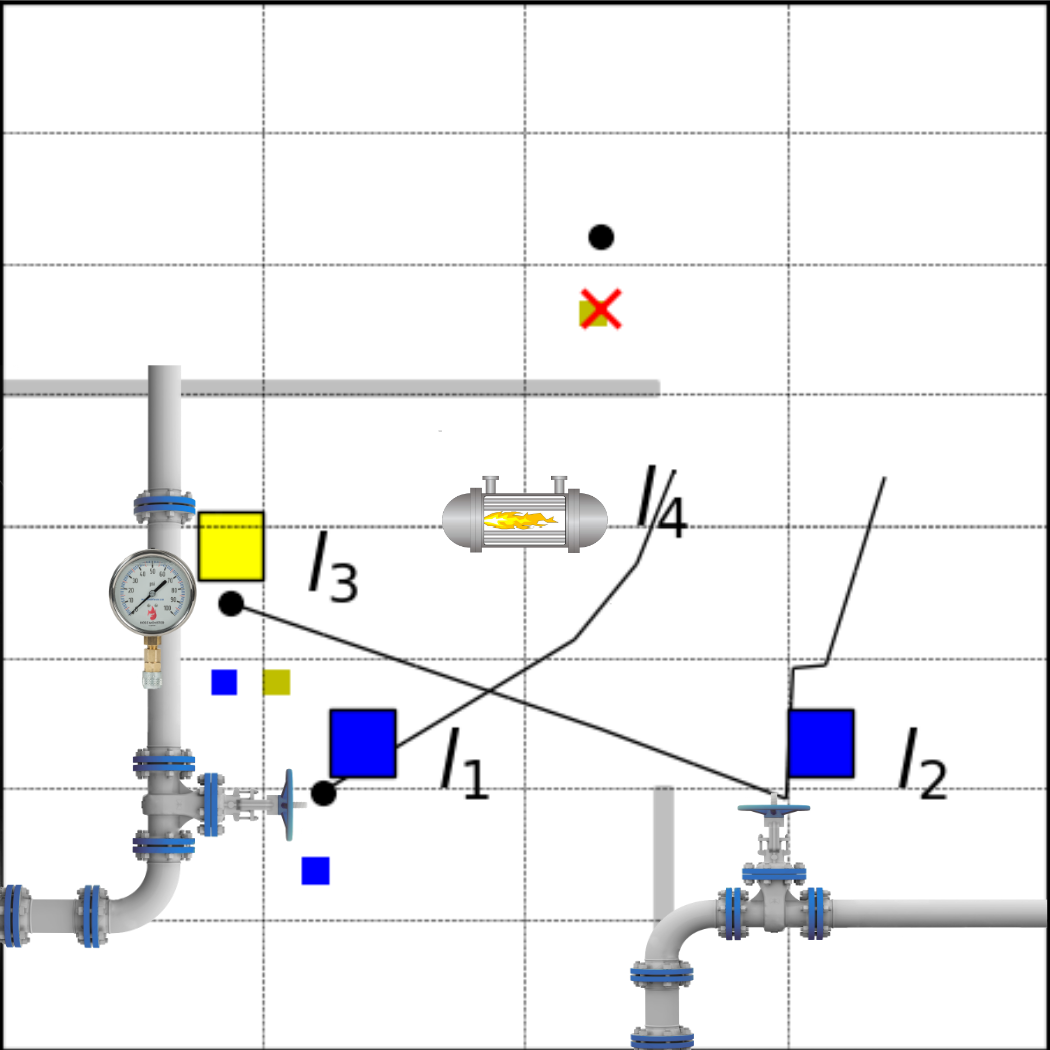}}   
 \caption{
Example \ref{ex:setup}: The small squares below each robot indicate the abilities each robot possesses; the ability to operate valves and take photos are denoted by the blue and the yellow square, respectively. The locations are represented by the larger squares, whose color indicates which ability needs to be used at that location as per $\phi$.
}
\label{fig:3robot}
\end{figure}

\subsection{Mission Specification}\label{sec:PFLTLMission}
The goal of the robots is to accomplish a high-level long-horizon collaborative mission captured by a global Linear Temporal Logic (LTL) specification $\phi$, that requires them to apply their capabilities at specific regions/objects $\ell_e$ in a temporal and logical fashion.
LTL comprises a set of atomic propositions (i.e., Boolean variables), denoted by $\mathcal{AP}$, Boolean operators, (i.e., conjunction $\wedge$, and negation $\neg$), and two temporal operators, next $\bigcirc$ and until $\mathcal{U}$. LTL formulas over a set $\mathcal{AP}$ can be constructed based on the following grammar: $\phi::=\text{true}~|~\pi~|~\phi_1\wedge\phi_2~|~\neg\phi~|~\bigcirc\phi~|~\phi_1~\mathcal{U}~\phi_2$, where $\pi\in\mathcal{AP}$. For brevity, we abstain from presenting the derivations of other Boolean and temporal operators, e.g., \textit{always} $\square$, \textit{eventually} $\lozenge$, \textit{implication} $\Rightarrow$, which can be found in \cite{baier2008principles}. 
We consider LTL tasks constructed based on the following team-based atomic predicate: 
\begin{equation}\label{eq:pip}
\pi_{\ccalT_c}(j, c, \ell_e)=
 \begin{cases}
  \text{true}, & \text{if $j\in\ccalT_c \text{ applies } c\text{ at }\ell_e$}\\
  \text{false}, & \text{otherwise}
 \end{cases}       
\end{equation}
The predicate in \eqref{eq:pip} is true when \textit{any} robot in the team $\ccalT_c$ applies the skill $c$ (e.g., `grasp') at the region/object $\ell_e$. The robot that has been assigned with this sub-task/predicate is denoted by $j$ in \eqref{eq:pip}. 
We assume an initial assignment of  robots $j$ to predicates \eqref{eq:pip} is given and that the resulting mission $\phi$ is feasible; this is a common assumption in the majority of related works. 
Based on the definition of this predicate, we also have that $\neg \pi_{\ccalT_c}(j, c, \ell_e)$ is satisfied if \textit{none} of the robots in $\ccalT_c$ applies the skill $c$ at $\ell_e$. In other words, if a predicate appears with a negation in front of it, then it concerns the whole team $\ccalT_c$ and not a specific robot $j$; i.e., the definition of robot $j$ in \eqref{eq:pip} is redundant and, therefore, replaced by the symbol $\varnothing$.
We also assume that the robots have perfect skills, i.e., once robot $j$ applies the skill $c$ at $\ell_e$, then \eqref{eq:pip} is satisfied. An example of a simple LTL mission is given in Examples \ref{ex:setup}-\ref{ex:LTL}. Given an LTL formula, a multi-robot path $\tau$ can be constructed using existing methods such as \cite{luo2021abstraction}.
The path $\tau$ is defined as an infinite sequence of states i.e., $\tau=\tau(0),\dots,\tau(t)\dots$. In $\tau$, each state $\tau(t)$ is defined as $\tau(t)=[\bbp(t),\bbs(t)]$, 
where $\bbp(t)$ is the multi-robot system state and $\bbs(t)=[s_1(t),\dots,s_N(t)]$, $s_j(t)\in\ccalC$. In other words, $s_j(t)$ determines which skill robot $j$ should apply at time $t$; if robot $j$ does not need to apply any skill at time $t$, then, with slight abuse of notation, we denote this by $s_j(t)=\varnothing$.
\subsection{Problem Statement: Resilient Temporal Logic Planning}
Consider a robot team tasked with completing a mission $\phi$. As the robots execute a designed plan $\tau$, certain robot skills may fail unexpectedly; see Ex. \ref{ex:setup}-Ex. \ref{ex:LTL}. In this case, $\tau$  may no longer be feasible compromising mission performance. Our goal is to address the following problem: 
\begin{problem}\label{prob_statement}
Consider a mission $\phi$ defined over predicates \eqref{eq:pip}, an initial assignment of these predicates to robots, and an offline generated plan $\tau$ satisfying $\phi$. When failures of robot skills occur (possibly more than one at a time), design (i) an online task re-allocation method that re-assigns sub-tasks (i.e., atomic predicates) associated with the failed skills to the robots based on their current skills; (ii) revise $\tau$ to satisfy the LTL formula arising after task re-allocation.
\end{problem}

\begin{ex}[Scenario I]\label{ex:setup}
Consider a group of $N=3$ robots with simple dynamics of the form $\bbp(t+1)=\bbp(t)+\bbu(t)$. 
The abilities among the robots are denoted by $c_1$, $c_2$, and $c_3$ 
referring to mobility, grasping valves and turning them on, and photo-taking skills using a camera, respectively. The skill-based teams are defined as $\ccalT_{c_1}(0)=\{1,2,3\}$, $\ccalT_{c_2}(0)=\{1,2\}$, and $\ccalT_{c_3}(0)=\{2,3\}$.
The robots are responsible for accomplishing a pipeline inspection task modeled by the following LTL formula: $ \phi =\Diamond(\pi_1\wedge\Diamond(\pi_2))\wedge\Diamond\pi_3 \wedge \square \neg\pi_4$
where $\pi_1=\pi_{\ccalT_2}(1, c_2, \ell_1)$, $\pi_2=\pi_{\ccalT_3}(3, c_3, \ell_3)$,
$\pi_3=\pi_{\ccalT_2}(2, c_2, \ell_2)$ 
\eqref{eq:pip}, and $\pi_4=\pi_{\ccalR}(\varnothing, c_1, \ell_4)$. 
In words, it requires robot $1$, to turn on the valve at location $\ell_1$ of a pipeline (modeled by $\pi_1$), and, subsequently, robot $3$ to take a photo  of the flow gauge there (modeled by $\pi_2$). Eventually, robot $2$ must turn on a valve at another location $\ell_2$ (modeled by $\pi_3$). At all times all robots must avoid moving close to $\ell_4$ ($c_1$) as there is a boiler present there, with high temperatures unfavorable for the robots (modeled by $\pi_4$). The initial locations and skills of the robots are shown in Fig. \ref{fig:3ra}. A feasible plan $\tau$ is shown in Figure \ref{fig:3rb}. As the robots execute these plans, robot capabilities may fail. Specifically, Fig. \ref{fig:3rc} shows a case where the camera of robot $3$ failed at $t=3$, i.e., $\zeta_{c_3}^3(2)=1$, $\zeta_{c_3}^3(3)=0$, and $\ccalT_{c_3}(3)=\{2\}$. Due to this failure, $\pi_2$, which was originally assigned to robot $3$, cannot be satisfied. As a result, it has to be re-assigned to another robot in team $\ccalT_{c_3}(3)$. The proposed algorithm assigns $\pi_2$ to robot $2$. Paths satisfying the mission after this re-assignment are shown in Fig.  \ref{fig:3rd}. This example is re-visited in Sec. \ref{sec:Sim}.  
\end{ex}




\begin{ex}[Scenario II]\label{ex:LTL}
Consider a team of $N=5$ robots with skills $c_1$, $c_2$, $c_3$, $c_4$, and $c_5$. The skills $c_1-c_3$ are defined as above. The remaining ones refer to picking up objects and temperature sensors.
The skill-based teams are $\ccalT_{c_1}(0)=\{1,2,3,4,5\}$, $\ccalT_{c_2}(0)=\{1,4\}$, $\ccalT_{c_3}(0)=\{1,2,4,5\}$, $\ccalT_{c_4}(0)=\{3\}$, and $\ccalT_{c_5}(0)=\{4,5\}$. Consider the LTL mission: $\phi = \Diamond(\pi_1\wedge\Diamond( (\pi_2 || \pi_3) \wedge \pi_5\wedge\Diamond (\pi_1)))  \wedge \square \neg\pi_4$, where $\pi_1=\pi_{\ccalT_2}(1, c_2, \ell_1)$, $\pi_2=\pi_{\ccalT_3}(2, c_3, \ell_2)$, $\pi_3=\pi_{\ccalT_4}(3, c_4, \ell_3)$, $\pi_4=\pi_{\ccalT_2}(\varnothing, c_1, \ell_2)$, and $\pi_5=\pi_{\ccalT_5}(5, c_5, \ell_5)$. If skill $c_3$ of robot $2$ and $c_4$ of robot $3$ fails at $t=1$, then a robot $i\in\ccalT_{c_3}(1)=\{1,4,5\}$ needs to take over $\pi_2$. Observe that although robot $4$ is not assigned any predicate, it cannot be assigned $\pi_2$ as satisfaction of $\pi_2$ by $4$ would result in satisfaction of $\pi_4$ which would violate $\phi$. \textcolor{black}{Robot $1$ also has the same constraint. As a result, only $5$ can take over $\pi_2$. However, note that robot $5$  needs to complete $\pi_5$ at the same time $\pi_2$ needs to be satisfied. Also observe that $\ccalT_{c_4}(1)=\varnothing$, and thus $\pi_3$ cannot be satisfied.} \textcolor{black}{Our proposed algorithm is designed to handle such challenging scenarios; see Sec. \ref{sec:ResilientPlanning}.
This example is re-visited in Sec. \ref{sec:ResilientPlanning}-\ref{sec:Sim}.
}
 %
%
\end{ex}






\begin{rem}[Independence of Sub-tasks]\label{rem:independ} 
Assume that some skills of robot $j$ fail. We assume that all atomic predicates associated with $j$ and its failed skills that appear in $\phi$ (irrespective of the locations $\ell_e$) can be re-allocated independently from each other, i.e., they do need to be assigned to the same robot $i\neq j$. For example, in Ex. \ref{ex:LTL}  
%
observe that the  atomic proposition $\pi_1$ appears twice in $\phi$. If $c_2$ fails for robot $1$, then  the `leftmost' and the `rightmost' proposition $\pi_1$ do not have to be satisfied by the same robot.

\end{rem}

\section{Resilient Temporal Logic Planning Against Robot Skill Failures}\label{sec:ResilientPlanning}
In this section we present an algorithm to solve Problem \ref{prob_statement}. Our solution consists of three parts. In the first part, we generate offline a path $\tau$ satisfying $\phi$ using our recently proposed sampling-based planner \cite{luo2021abstraction}; see Sections \ref{sec:nba}-\ref{sec:samplingAlg}. In the second part, 
we design an online local task re-allocation algorithm re-assigning sub-tasks to robots. The proposed algorithm is \text{local} in the sense that it aims to minimize the total number of sub-task re-allocations that have to happen due to the failed skill; see Sections \ref{sec:taskRealloc0}-\ref{sec:taskRealloc3}. Given the revised LTL formula, we propose an online re-planning method 
that locally revises the current robot paths to design new feasible ones. 
In Section \ref{sec:analysis}, we analyze correctness, completeness, and optimality of the proposed algorithm as well as its limitations. 

\subsection{From LTL Missions to Automata}\label{sec:nba}
Given an LTL mission $\phi$, we translate it, offline, into a Nondeterministic B$\ddot{\text{u}}$chi Automaton (NBA)  \cite{baier2008principles}. 


\begin{definition}[NBA]
A Nondeterministic B$\ddot{\text{u}}$chi Automaton (NBA) $B$ over $\Sigma=2^{\mathcal{AP}}$ is defined as a tuple $B=\left(\ccalQ_{B}, \ccalQ_{B}^0,\Sigma,\delta_B,\ccalQ_{B}^F\right)$, where $\ccalQ_{B}$ is the set of states, $\ccalQ_{B}^0\subseteq\ccalQ_{B}$ is a set of initial states, $\Sigma$ is an alphabet, $\delta_B:\ccalQ_B\times\Sigma\rightarrow2^{\ccalQ_B}$ is a non-deterministic transition relation, and $\ccalQ_{B}^F\in\ccalQ_{B}$ is a set of accepting/final states. 
\end{definition}


Next, we discuss the accepting condition of the NBA that is used to find plans $\tau$ that satisfy $\phi$. We define a labeling function $L:\mathbb{R}^{N}\times\mathcal{C}^{N}\rightarrow 2^{\mathcal{AP}}$ determining which atomic propositions are true given the multi-robot state $\bbp(t)$ and the applied skills $\bbs(t)$. 
An infinite run $\rho_B= q_B^1 \dots q_B^k \dots$ of $B$ over an infinite word $w = \sigma_0\sigma_1\sigma_2\dots\in\Sigma^{\omega}$, where $\sigma_k \in \Sigma$, $\forall k \in \mathbb{N}$, is an infinite sequence of NBA states $q_B^k$, $\forall k \in \mathbb{N}$, such that $q_B^{k+1}\in\delta_B(q_B^k, \sigma_k)$ and $q_B^0\in\ccalQ_B^0$. An infinite run $\rho_B$ is called 
\textit{accepting} if $\text{Inf} ( \rho_B) \bigcap \mathcal{Q}_B^F \neq \emptyset$, where $\text{Inf} (\rho_B)$ represents the set of states that appear in $\rho_B$ infinitely often. A plan $\tau=[\bbp(0),\bbs(0)],[\bbp(1),\bbs(1)],\dots,[\bbp(t),\bbs(t)],\dots$ is feasible if the word $w=\sigma_0\sigma_1\dots \sigma_t\dots$ where $\sigma_t=L([\bbp(t),\bbs(t)])$, results in at least one accepting run $\rho_B$. \textcolor{black}{Since we assume that robots cannot apply more than one skill at a time, in what follows, we assume that the NBA is pruned as in \cite{luo2021abstraction} by removing transitions that violate this assumption.}




\subsection{Sampling-based planner}\label{sec:samplingAlg}
Given an LTL formula $\phi$, we design feasible plans $\tau$ using the motion planner developed in \cite{luo2021abstraction} due to its abstraction-free and scalability benefits; any other motion planner can be employed. Specifically, \cite{luo2021abstraction} proposes a sampling-based planner that incrementally builds trees that explore both the robot motion space and the automaton state-space. The nodes of the tree are defined as $\bbq(t)=[\bbp(t), \bbs(t), q_B(t)]$. The root $\bbq(0)$ of the tree is defined based on the initial robot states $\bbp(0)$, a null vector $\bbs(0)$, and an initial NBA state $q_B(0)\in\ccalQ_B^0$. At every iteration of the algorithm, a new state $\bbq(t)$ is sampled and added to the tree if is feasible (i.e., it does not result in violation of $\phi$). 
This sampling-based approach is capable of generating plans, i.e., sequences of states $\bbq(t)$ in a prefix-suffix form. The prefix is executed first followed by the indefinite execution of the suffix. We denote this plan by $\tau_H=\tau_H^{\text{pre}}[\tau_H^{\text{suf}}]^{\omega}$, where $\omega$ stands for indefinite repetition. The prefix part $\tau_H^{\text{pre}}$ is defined as $\tau_H^{\text{pre}}=\bbq(0),\bbq(1),\dots, \bbq(T)$, for some horizon $T\geq 0$, where $q_B(H)\in\ccalQ_B^F$, and the suffix part $\tau_H^{\text{suf}}$ is defined as $\tau_H^{\text{suf}}=\bbq(T+1),\bbq(T+2),\dots, \bbq(T+K)$, for some $K\geq0$ where $\bbq(T+1)=\bbq(T+K)=\bbq(T)$. 
To design the prefix part, a goal region $\ccalR_{\text{pre}}$ is defined collecting all possible nodes $\bbq$ whose NBA state belongs to $\ccalQ_B^F$, i.e., $\ccalR_{\text{pre}}=\{\bbq=(\bbp,\bbs,q_B)~|~q_B\in\ccalQ_B^F\}$. Once a tree branch, i.e., a sequence of tree nodes, starting from the root and ending in $\ccalR_{\text{pre}}$ is constructed, a prefix part $\tau_H^{\text{pre}}$ can be derived. Similarly, given a prefix part  $\tau_H^{\text{pre}}$, its corresponding suffix path can be found by building a new tree rooted at the last state of $\tau_H^{\text{pre}}=\bbq(T+1)$. Then, the goal region is defined so that a cyclic path (i.e., the suffix part) around the root can be derived. Thus, the goal region, in this case, is defined as $\ccalR_{\text{suf}}=\{\bbq=(\bbp,\bbs,q_B)~|~q_B(T+1)\in \delta_B(L([\bbp,\bbs]),q_B)\}$.
Eliminating the NBA states from $\tau_H$ results in the desired plan $\tau=\tau^{\text{pre}}[\tau^{\text{suf}}]^{\omega}$; more details can be found in \cite{luo2021abstraction}. We denote by $\tau(t)=[\bbp(t),\bbs(t)]$ the state and action of the robots at time $t$ as per $\tau$. The same notation extends to $\tau_H$.

\subsection{Setting Up the Task Reallocation Process}\label{sec:taskRealloc0}
Assume that a feasible plan $\tau$ has been generated as in Section \ref{sec:samplingAlg}. Consider the case where, as the robots execute the plan $\tau$, a sub-set of robot capabilities fail at an a priori unknown time $t$ resulting in new vectors $\bbZ_j(t)$ for some robots $j\in\ccalR$. 
Let $\mathcal{AP}_F\subseteq\mathcal{AP}$ be a set collecting all atomic predicates $\pi_{\ccalT_c}(j,c,\ell_e)$ in $\phi$ that can never be true due to the failed skills, given the current assignment of predicates to robots.\footnote{Negated predicates of the form $\neg\pi_{\ccalT_c}(\varnothing,c,\ell_e)$ are not included in $\mathcal{AP}_F$ as such predicates refer to the whole team $\ccalT_c$ and not a specific robot. As a result, it is not meaningful to re-assign them.} With slight abuse of terminology, we refer to $\pi_{\ccalT_c}(j,c,\ell_e)\in\mathcal{AP}_F$ as a `failed' atomic predicate. The following process is repeated for all failed atomic predicates \textcolor{black}{sequentially and in any order (line \ref{rp:for1}, Alg. \ref{alg:RP})}. 

%

%

%
Our goal is to re-assign $\pi_{\ccalT_c}(j,c,\ell_e)\in\mathcal{AP}_F$ to a new robot $i\in\ccalT_c$; again, with slight abuse of terminology, we also refer to this process as `fixing/repairing' the failed predicate. Informally, the key challenge here is that possibly all robots $i\in\ccalT_c$ may be busy with other sub-tasks; \textcolor{black}{see Ex. \ref{ex:LTL}.} In this case, repairing the failed predicate may trigger a sequence of task re-assignments. The reason is that fixing the failed predicate requires another robot $i\in\ccalT_c$ to take over it while, at the same time, another robot may have to take over the sub-task that was originally assigned to robot $i$ and so on until there are no unassigned sub-tasks. Our goal is to fix the failed predicate by making the minimum possible number of task re-assignments to minimally disrupt mission planning.

Assume that failures occur at time $t$. Let $q_B^\text{cur}=q_B(t)$ be the NBA state that the robots have reached after executing the first $t$ steps of their plan $\tau$.
%
Given a failed predicate $\pi_{\ccalT_c}(j,c,\ell_e)$, we compute all NBA states that can be reached from $q_B^\text{cur}$ through a multi-hop path. This step can be implemented by treating the NBA as a directed graph and checking which states $q_B'\in\ccalQ_B$ can be reached from $q_B^\text{cur}$. We collect these states (including $q_B^\text{cur}$) in a set $\hat{\ccalQ}_B^{\text{cur}}\subseteq\ccalQ_B$.
%
Then, among all NBA transitions between states  $q_B',q_B''\in\hat{\ccalQ}_B^{\text{cur}}$, we collect in a set $\ccalE$ the ones for which $\pi_{\ccalT_c}(j,c,\ell_e)$ appears in the corresponding Boolean formula $b_{q_B',q_B''}$ (line \ref{rp:collect}, Alg. \ref{alg:RP}). 
With slight abuse of notation, let $e=(q_B',q_B'')$ denote an edge in $\ccalE$ (i.e., an NBA transition from $q_B'$ to $q_B''$). 
Given the edge $e\in\ccalE$, we define the Boolean formula $b_{q_B',q_B''}$ defined over $\mathcal{AP}$ for which it holds that if $q_B''\in\delta_B(q_B',\sigma)$, for some $\sigma\in\Sigma$, then $\sigma\models b_{q_B',q_B''}$ and vice versa. Such Boolean formulas can be constructed automatically using existing tools such as \cite{gastin2001fast}.
%

%
\subsection{Constraints for Task Reallocation}\label{sec:taskRealloc2}

The key idea in our approach is to inspect all edges $e\in\ccalE$ and re-assign the predicates appearing in the respective formulas $b_{q_B',q_B''}$ so that there is no unassigned predicate and that the resulting LTL formula remains feasible (i.e., the robots can still generate paths that yield accepting NBA runs). 
A necessary condition to preserve the feasibility of the LTL formula after task re-allocation is that $b_{q_B',q_B''}$ should be feasible (i.e., it can become `true') after task reallocation, $\forall e=(q_B',q_B'')\in\ccalE$; see also Rem. \ref{rem:limitations}. In other words,
there should exist a symbol $\sigma=L([\bbp,\bbs])\in\Sigma$ generated by the robots, that satisfies the revised formulas $b_{q_B',q_B''}$ arising after task re-assignment. 

Next, we will translate this condition into constraints that should be respected when a failed predicate is repaired in an edge $e=(q_B',q_B'')$ (where  $q_B'$ may coincide with $ q_B''$).  
To this end, we re-write the Boolean $b_{q_B',q_B''}$ in a disjunctive normal form (DNF), i.e., $b_{q_B',q_B''}=\bigvee_{d=1}^Db_{q_B',q_B''}^d$, for some $D>0$.\footnote{The failed predicate may appear in more than one sub-formula $b_{q_B',q_B''}^d$ but it will be repaired separately/independently for each sub-formula; see Remark \ref{rem:independ}.}
For each Boolean formula $b_{q_B',q_B''}^d$, we define the set $\ccalR_{q_B',q_B''}^d\subseteq\ccalR$ that collects robots that appear in $b_{q_B',q_B''}^d$; see also Ex. \ref{ex:Rg}. 
Given $b_{q_B',q_B''}^d$, we also define the set $\mathcal{AP}_i$ that collects all predicates of the form \eqref{eq:pip} that appear in $b_{q_B',q_B''}^d$ assuming that the ones that are associated with skills $c$ for which $\zeta_c^i(t)=1$ are all assigned to robot $i$. \textcolor{black}{Given $\mathcal{AP}_i$, we can construct the alphabet $\Sigma_i=2^{\mathcal{AP}_i}$.} For instance, if $b_{q_B',q_B''}^d=\pi_{\ccalT_c}(j,c,\ell_e)\wedge\pi_{\ccalT_m}(i,m,\ell_f)$, then $\mathcal{AP}_i=\{\pi_{\ccalT_c}(i,c,\ell_e), \pi_{\ccalT_m}(i,m,\ell_f) \}$ if $i\in\ccalT_m\cap\ccalT_c$ 
\textcolor{black}{ and $\Sigma_i=\{\pi_{\ccalT_c}(i,c,\ell_e),\pi_{\ccalT_m}(i,m,\ell_f), \pi_{\ccalT_c}(i,c,\ell_e)\pi_{\ccalT_m}(i,m,\ell_f),\epsilon\}$, where $\epsilon$ stands for the empty symbol.}
Then, we define following function modeling task re-allocation constraints.


\begin{definition} [Function $V_{q_B',q_B''}^d$]\label{def:constrainedSet}
\textcolor{black}{The set-valued function $V_{q_B',q_B''}^d:\ccalR\rightarrow \Sigma_i$, given as input a robot index $i\in\ccalR$, returns a set collecting all symbols $\sigma_i\in\Sigma_i$ that if robot $i\in\ccalR$ generates, then $b_{q_B',q_B''}^d$ will be `false' regardless of the values of the other predicates}. 
We define $V_{q_B',q_B''}^d(i)=\emptyset$ for all robots $i\in\ccalR\setminus\ccalR_{q_B',q_B''}^d$. 
\end{definition}

The sets $V^d_{q_B',q_B''}(i)$ capture \textit{constraints} on what tasks/predicates a robot $i$ can take over. Specifically, when re-assigning predicates appearing in an edge $e=(q_B',q_B'')$ (where $q_B'$ may coincide with $q_B''$)
robot $i\in\ccalR_{q_B',q_B''}^d$ cannot take over predicates that appear in \textcolor{black}{$V^d_{q_B',q_B'';}(i)$} 
as this would falsify $b_{q_B',q_B''}^d$. 
Next, we define the function $g_{q_B',q_B''}^d$ that will be used later to determine which robots are currently busy with other sub-tasks.

\begin{definition} [Function $g_{q_B',q_B''}^d$]
The function $g_{q_B',q_B''}^d:\ccalR\rightarrow \mathcal{AP}$, given as an input a robot index $i\in\ccalR$, returns a set collecting the atomic predicates that are assigned to robot $i$ in $b_{q_B',q_B''}^d$ excluding the negated ones and the failed predicate.\footnote{There is at most one predicate assigned to a robot $i\in\ccalR_{q_B',q_B''}^d$ as all NBA transitions requiring a robot to satisfy more than one predicate at a time are pruned; see Section \ref{sec:nba}.} We define $g(i)=\varnothing$, for all robots $i\not\in\ccalR_{q_B',q_B''}^d$ and for all robots $i\in\ccalR_{q_B',q_B''}^d$ appearing in negated predicates or in the failed predicate. 
\end{definition}

\begin{ex}[Function $g_{q_B',q_B''}^d$, $V_{q_B',q_B''}^d$ and sets $\ccalR_{q_B',q_B''}^d$]\label{ex:Rg}
Consider the LTL formula given in Example \ref{ex:LTL}. We focus on an NBA transition with: $b_{q_B',q_B''}^{d}=\pi_2\wedge\pi_5\wedge\neg\pi_4$, where recall  
$\pi_2=\pi_{\ccalT_3}(2, c_3, \ell_2)$, $\pi_5=\pi_{\ccalT_5}(5, c_5, \ell_5)$, and $\pi_4=\pi_{\ccalT_2}(\varnothing, c_1, \ell_2)$.
\textcolor{black}{Then, $\ccalR_{q_B',q_B''}^d=\{2,5\}\cup\ccalT_2$.} Also, $g_{q_B',q_B''}^d(i)=\varnothing$, for all robots $i\in\ccalR_{q_B',q_B''}^d\setminus\{2,5\}$ and $g(2)=\pi_2$, $g(5)=\pi_5$. We also have $V_{q_B',q_B''}^d(i)=\emptyset$ for all $i\notin\ccalT_2$ and $V_{q_B',q_B''}^d(i)=\{\pi_{\ccalT_2}(i,c_1,\ell_2), \pi_{\ccalT_3}(i,c_3,\ell_2), \pi_{\ccalT_2}(i,c_1,\ell_2)\pi_{\ccalT_3}(i,c_3,\ell_2)\}$ for all $i\in\ccalT_2\cap\ccalT_3$. Notice that $\pi_{\ccalT_2}(i,c_1,\ell_2)$ and $\pi_{\ccalT_2}(i,c_1,\ell_2)\pi_{\ccalT_3}(i,c_3,\ell_2)$ are included because of $\neg\pi_{\ccalT_2}(\varnothing, c_1, \ell_2)$ in $b_{q_B',q_B''}^{d}$. Also, $\pi_{\ccalT_3}(i,c_3,\ell_2)$ is included because if robot \textcolor{black}{$1$ or} $4$ satisfies it, then \textcolor{black}{they} will be close to location $\ell_2$; therefore, $\pi_{\ccalT_2}(i,c_1,\ell_2)$ will also be satisfied resulting in violation of  $b_{q_B',q_B''}^{d}$.
\end{ex}

\begin{algorithm}[t]
\caption{Local Task Re-allocation}
\LinesNumbered
\label{alg:RP}
\KwIn{ (i) NBA $\ccalB$, (ii) Current NBA state $q_B^{\text{cur}}$; (iii) Set of failed predicates $\mathcal{AP}_F$}
\KwOut{Revised NBA}

\For{every $\pi\in\mathcal{AP}_F$}{\label{rp:for1}
Define the ordered set of edges $\ccalE$\;\label{rp:collect}
\For{every $e=(q_B',q_B'')\in\mathcal{E}$}{\label{rp:for2}
    Rewrite: $b_{q_B',q_B''}=\bigvee_{d=1}^D b_{q_B',q_B''}^d$\;\label{rp:DNF} 
    \For{$d=1,\dots,D$}{\label{rp:for3}
    Define $\ccalG$ and functions $V^{d}_{q_B',q_B''}, g^{d}_{q_B',q_B''}$\;\label{rp:graph}
    Apply Alg. \ref{alg:bfs} to compute a sequence of re-assignments $p=p(0),\dots,p(P)$\;\label{rp:applyBFS}
    \If{$\exists p$}{
    Re-assign atomic predicates as per $p$\;\label{rp:reassign}
    }
    \Else{
    Assign $\pi=\text{False}$\;\label{rp:no_sol}
    }
    Revise $b_{q_B',q_B''}^d$\;
    }
    }
    }
\end{algorithm}

\subsection{Local Task Reallocation Algorithm}\label{sec:taskRealloc3}

In this section, we present the proposed task re-allocation algorithm. Once the set $\ccalE$ is constructed, we repeat the following steps for each edge $e\in\ccalE$ \textcolor{black}{in parallel (line \ref{rp:for2}, Alg. \ref{alg:RP})}. 
Given $e$, we express its corresponding Boolean formula in a DNF form $b_{q_B',q_B''}=\bigvee_{d=1}^Db_{q_B',q_B''}^d$ (whether $q_B'=q_B''$ or not) (line \ref{rp:DNF}, Alg. \ref{alg:RP}). Then we repeat the following steps \textcolor{black}{in parallel} for each sub-formula $b_{q_B',q_B''}^d$ (line \ref{rp:for3}, Alg. \ref{alg:RP}).

First, we define the following directed graph capturing all possible reassignments \textcolor{black}{in $b_{q_B',q_B''}^d$} \textcolor{black}{(even ones that violate constraints discussed in Section \ref{sec:taskRealloc2})}. Then, we will search over it to find  re-assignments of the sub-tasks/predicates appearing in $b_{q_B',q_B''}^d$ \textcolor{black}{so that there are no unassigned predicates and all assignments respect the constraints captured by $V_{q_B',q_B'}^d$.}
We denote this graph by $\ccalG=\{\ccalV_{\ccalG},\ccalE_{\ccalG},w_{\ccalG}\}$, where $\ccalV_{\ccalG},\ccalE_{\ccalG},w_{\ccalG}$ denote the set of nodes, edges, and a cost function $w_{\ccalG}:\ccalE_{\ccalG}\rightarrow\mathbb{R}$. \textcolor{black}{The set of nodes is defined as $\ccalV_{\ccalG}=\ccalR$}.
%
\textcolor{black}{An edge from a node $a$ to $a'\neq a$ exists if $a'\in\ccalT_{c}$, where $c$ is the skill required to satisfy the predicate $g_{q_B',q_B''}^d(a)$}. 
If $g_{q_B',q_B''}^d(a)=\varnothing$, then there no outgoing edges from $a$. The physical meaning of an edge is that robot $a'$ can take over the predicate (if any) assigned to $a$ in $b_{q_B',q_B''}^d$. 
The cost function $w_{\ccalG}$ is defined so that each edge has a cost equal to $1$. 
We emphasize that we do not explicitly construct this graph; instead, the task re-allocation algorithm only requires knowledge of the nodes $a'$ \textcolor{black}{$\in\ccalT_c$} that can be reached in one hop from any node $a$ (line \ref{rp:graph}, Alg. \ref{alg:RP}). 
Then, we apply a constrained Breadth First Search (BFS) algorithm over $\ccalG$ to re-assign sub-tasks (i.e., predicates appearing in $b_{q_B',q_B''}^d$) to robots (line \ref{rp:applyBFS}, Alg. \ref{alg:RP}). 
Our goal is to find a path in $\ccalG$ from the robot associated with failed predicate, denoted by $a_{\text{root}}$, to any node $a'$ satisfying $g_{q_B',q_B''}^d(a')=\varnothing$ (i.e., $a'$ is not assigned any task in $b_{q_B',q_B''}^d$) while respecting the constraints discussed in Section \ref{sec:taskRealloc2}. \textcolor{black}{We define the set $\ccalA=\{a\in\ccalR~|~g_{q_B',q_B''}^d(a)=\varnothing\}$.} 
%
%
\textcolor{black}{Let $p=p(0),p(1),\dots,p(P)$ denote such path over $\ccalG$, where $p(0)=a_{\text{root}}$, $p(P)\in\ccalA$ and $p(k)\notin\ccalA$, for all $k\in\{2,\dots,P-1\}$. 
Such a path will dictate the re-assignment of tasks required to fix the failed predicate. 
Specifically, the robot $p(k+1)$ takes over the sub-task of robot $p(k)$ in $b_{q_B',q_B''}^d$ (i.e., the atomic predicate $g_{q_B',q_B''}^d(p(k))$). Note that this means that the robot $p(k+1)$ gives up on its current sub-tasks (which will be taken over by the robot $p(k+2)$).} \textcolor{black}{Also, since this path should respect the constraints discussed in Section \ref{sec:taskRealloc2}, it must also hold that $g_{q_B',q_B''}^d(p(k))\notin V_{q_B',q_B''}^d(p(k+1))$ for all $k\in\{2,\dots,P-1\}$.}


\begin{algorithm}[t]
\caption{Breadth First Search}
\LinesNumbered
\label{alg:bfs}
\KwIn{ (i) Failed predicate $\pi_{\ccalT_c}(j, c,\ell_e$), 
(ii) $V^{d}_{q_B',q_B''}$, (iii) $g^{d}_{q_B',q_B''}$, (iv) \textcolor{black}{Teams $\ccalT_c(t), ~\forall c\in\ccalC$}}
\KwOut{Path $p$}
$a_{\text{root}}$ = $j$\;\label{bfs:root}
$\mathcal{Q}$ = [$a_{\text{root}}$]\;\label{bfs:queue}
$\texttt{Flag}_{\text{root}}$ = True\;\label{bfs:falgSetTrue}
\While{$\sim$empty($\mathcal{Q}$)}{
    $a\leftarrow$ POP($\mathcal{Q}$)\;\label{bfs:pop}
    \If{$a\in\ccalA$ \& $\sim \texttt{Flag}_{\text{root}}$}{\label{bfs:terminate}
        Using $\texttt{Parent}$ function return path $p$ \;\label{bfs:return}
    }
    $\texttt{Flag}_{\text{root}}$=False\;\label{bfs:falgSetFalse}
    \For{$a'$ adjacent to $a$ in $\ccalG$}{
        \If{$g_{q_B',q_B''}^d(a)\notin V^{d}_{q_B',q_B''}(a')$ \& $a'$ not explored}
        {\label{bfs:conditions}
                Label $a'$ as explored\;
                $\texttt{Parent}(a') = a$\;\label{bfs:addparent}
                Append $a'$ to $\mathcal{Q}$;\label{bfs:appendToQ
            }
            
        }
        
    }
}
return ($\emptyset$) \label{bfs:failreturn}
\end{algorithm}
We apply a BFS algorithm to find the shortest path $p$ summarized in Alg. \ref{alg:bfs}; see Fig. \ref{fig:bfs}. 
Notice that there are two key differences with the standard BFS algorithm. 
The first one concerns when nodes are added to the queue data structure $\mathcal{Q}$ of the BFS algorithm. Specifically, when a node $a$ is popped/removed from $\mathcal{Q}$, then each adjacent node $a'$ is added  to $\mathcal{Q}$ if (1) it has not been explored yet (as in standard BFS); 
and (2) $g_{q_B',q_B''}^d(a)\notin V^{d}_{q_B',q_B''}(a')$ (line \ref{bfs:conditions}, Alg. \ref{alg:bfs});
%
The first constraint prevents cases where a single robot will have to replace two robots as this may result in Boolean formulas $b_{q_B',q_B''}^d$ that is satisfied if a robot applies more than one skill simultaneously which is unfeasible (see Section \ref{sec:PF}). The second constraint ensures satisfaction of the constraints discussed in Section \ref{sec:taskRealloc2}. The second difference from standard BFS is that the root node is not initially labeled as `explored'. This allows the robot with the failed skill (root) to take over other sub-tasks using its remaining active skills (if any).  
Finally, we note that the graph-search process is terminated when the first feasible path from $a_{\text{root}}$ to any node in $\ccalA$ is found (line \ref{bfs:terminate}-\ref{bfs:return}, Alg. \ref{alg:bfs}). The reason is that this path corresponds to the minimum possible re-assignments that can occur to fix the failed predicate in $b_{q_B',q_B''}^d$; see Section \ref{sec:analysis}.

Once all failed predicates are fixed, the associated formulas $b_{q_B',q_B''}$ are accordingly revised, yielding a new NBA (line \ref{rp:reassign}, Alg. \ref{alg:RP}). If a failed predicate cannot be fixed (i.e., Alg. \ref{alg:bfs} cannot find a feasible path), then this failed predicate is replaced by the logical `false' in all related formulas $b_{q_B',q_B''}$ as there is no robot that can take over it (line \ref{rp:no_sol}, Alg. \ref{alg:RP}); note that this does not necessarily imply infeasibility of the LTL formula. 
This revised NBA is an input to an online planner that designs new paths; see Sec. \ref{sec:onlineReplan}.

\subsection{Online replanning}\label{sec:onlineReplan}
Assume that the multi-robot system is at state $\tau_H(t)=[\bbp(t), \bbs(t), q_B(t)]$ when capability failures occurred and that Alg. \ref{alg:RP} has re-assigned tasks to robots. A straightforward solution to revise the robot paths is to apply the sampling-based planner, discussed in Section \ref{sec:samplingAlg}, to build a new prefix-suffix plan from scratch.
Nevertheless, re-planning from scratch for \textit{all} robots may be unnecessary given the `local' task re-allocations made by  Alg. \ref{alg:RP} while it may be impractical for large robot teams. Inspired by \cite{guo2013revising}, we propose an alternative re-planning approach that aims to locally revise the multi-robot plans; see also Fig. \ref{fig:revise}.

First, we define the path $\hat{\tau}_H=\tau_H^{\text{pre}}\tau_H^{\text{suf}}=[\bbq(1),\dots,\bbq(T)],[\bbq(T+1),\dots, \bbq(T+K)]$ that concatenate the prefix and the suffix part of the current plan $\tau_H$ (without repeating  $\tau_H^{\text{suf}}$); see also Section \ref{sec:samplingAlg}.
Second, we compute the state $\hat{\tau}_H(k)$ for which it holds $\hat{\tau}_H(k)=\tau_H(t)$. \footnote{It is possible that $t>T+K$ since the state $\tau_H(t)$ may belong to the suffix part and $\tau_H$ contains an infinite repetition of the suffix part. In $\hat{\tau}_H(k)$, $k$ points to the $k$-th entry in $\hat{\tau}_H$ where   $\hat{\tau}_H(k)=\tau_H(t)$. } Third, we define the ordered set $\ccalT_{\text{pre}}$ collecting all states $\hat{\tau}_H(k')=[\bbp(k'), \bbs(k'), q_B(k')]$ in $\hat{\tau}_H$ that satisfy the following requirements: i) $k' \geq k$; ii) ($q_B(k')\neq q_B(k'+1)$ and $(q_B(k'),q_B(k'+1))\in\ccalE$) or ($q_B(k')\neq q_B(k'-1)$ and $(q_B(k'-1),q_B(k'))\in\ccalE$); and iii) $k'\leq T$. We denote by $\ccalT_{\text{pre}}(e)$ the $e$-th entry in $\ccalT_{\text{pre}}$. 
We similarly define $\ccalT_{\text{suf}}$ where the third requirement is replaced by $T+1\leq k'\leq T+K$. The entries in these sets appear in increasing order of the indices $k'$ (i.e., $\hat{\tau}_H(k)$ always appear first).  

\begin{figure}[t]
  \centering
    \includegraphics[width=1\linewidth]{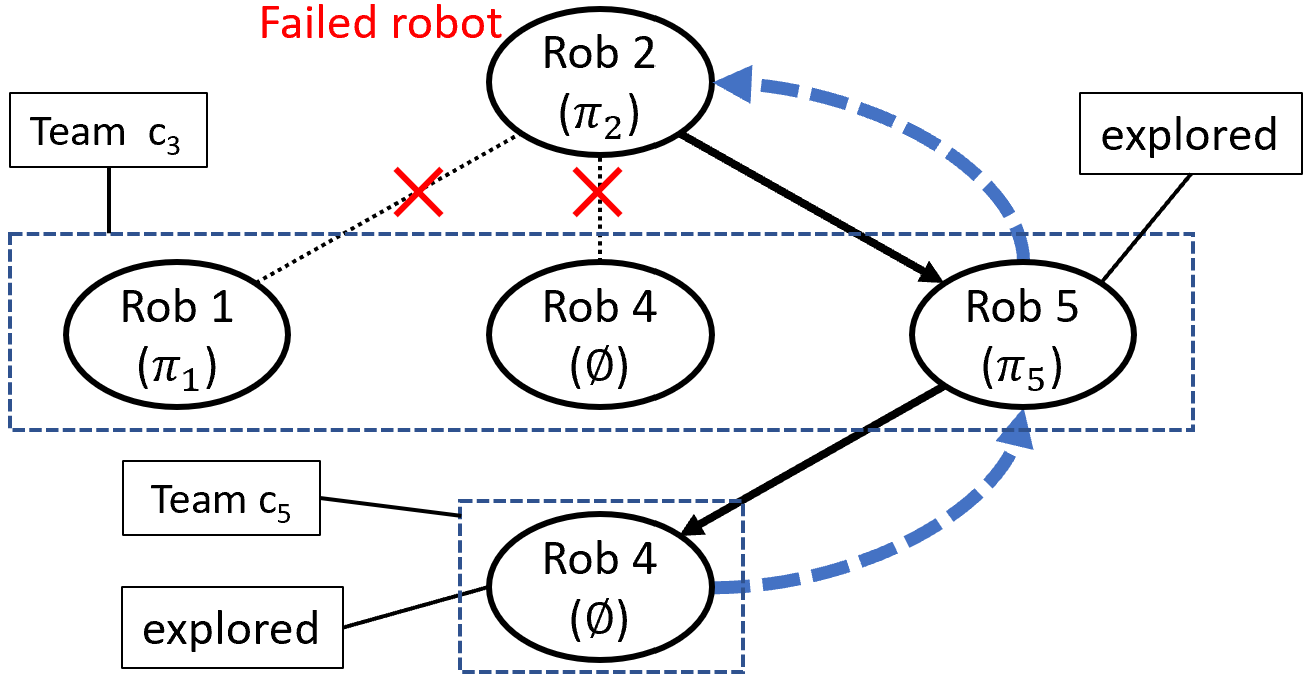}
  \caption{Consider in Example \ref{ex:LTL} the case where skill $c_3$ of robot $2$ fails, i.e., the failed predicate is $\pi_2$. We present the BFS tree (Alg. \ref{alg:bfs}) built to fix $\pi_2$ for the NBA transition enabled by  $b_{q_B',q_B''}^{d}=\pi_1\wedge\pi_2\wedge\pi_5\wedge\neg\pi_4$. The set $\ccalA$ is defined as $\ccalA=\{4\}$ and the root of the tree is robot $2$. Robots $1$, $4$, $5$ are adjacent to robot $2$ in $\ccalG$. Robots $1$ and $4$ are not connected to robot $2$ because they do not satisfy $g_{q_B',q_B''}^d(2)\notin V^{d}_{q_B',q_B''}(a')=\{\pi_2,\pi_4,\pi_2\pi_4\}$, for $a'\in\{1,4\}$. 
  Robot $5$ is connected to robot $2$ and subsequently, robot $4$ is connected to robot $5$. The blue dashed arrows show the re-assignment process along the computed path $p$, i.e., robot $4$ will take over $\pi_5$ and robot $5$ will take over the failed predicate. 
  }
  \label{fig:bfs}
\end{figure}

Assume that the failures occurred while the robots were executing the prefix part in $\tau_H$. 
%
To locally revise the prefix part, we build a tree, using the sampling-based planner discussed in Section \ref{sec:samplingAlg}, rooted at $\ccalT_{\text{pre}}(r)$, with $r=1$ and the goal region is defined as  
$\ccalR_{\text{pre}}^{\text{rev}}=\{\bbq(T)\}\cup \{\ccalT_{\text{pre}}(s)\}_{s=r+1}^{|\ccalT_{\text{pre}}|}\cup \ccalR_{\text{pre}}$, i.e., the goal is to reach either any subsequent state in $\ccalT_{\text{pre}}$ along  the current path $\hat{\tau}_H$, or reach the final state $\bbq(T)$ associated with the current path $\hat{\tau}_H$, or reach any final state in the set $\ccalR_{\text{pre}}$ defined in Section \ref{sec:samplingAlg}. We terminate the sampling-based planner as soon as it returns the first feasible path. If that path connects the root to either $\ccalR_{\text{pre}}$ or $\hat{\tau}_H(T)$ then this path corresponds to the new/revised prefix part; 
construction of all other trees terminates. Otherwise, if it finds a path towards a state $\ccalT_{\text{pre}}(s)$, $s>r$, then we build a new tree, exactly as before, rooted at $\ccalT_{\text{pre}}(r)$ where $r$ is re-defined to be $r=s$. Then, the above process is repeated until a path towards either  $\ccalR_{\text{pre}}$ or $\hat{\tau}_H(T)$ is found. The revised prefix path is constructed by replacing the affected parts in $\tau_{H}^{\text{pre}}$, as captured by $\ccalT_{\text{pre}}$, with the revised ones; see also Fig. \ref{fig:revise}. 
If no prefix path can be found, global re-planning is triggered where a new-prefix suffix plan is designed from scratch as discussed in Section \ref{sec:samplingAlg}; the only difference is that the root for the prefix tree is based on the current robot position $\bbp(t)$ and current NBA state $q_B(t)$.
%
Once the prefix part is designed, construction of the suffix part follows. If the revised prefix part ends in the state $\hat{\tau}_H(T)$, then the suffix part is constructed by following a similar logic as above. Otherwise, if it ends in a state $\ccalR_{\text{pre}}\setminus\hat{\tau}_H(T)$, the suffix part needs to be constructed from scratch by building a new tree rooted at the end state of the prefix path. 
If the failures occur when the robots are executing the suffix part in $\tau_H^{\text{suf}}$, then a similar approach is applied as before aiming to locally revise the suffix part i.e., the loop around $\hat{\tau}_H(T)$. If such a cyclic path around $\hat{\tau}_H(T)$ cannot be found, then a new prefix-suffix path is constructed from scratch.

\begin{figure}[t]
  \centering
      \subfigure[Current Path]{
    \label{fig:curPath}
  \includegraphics[width=1\linewidth]{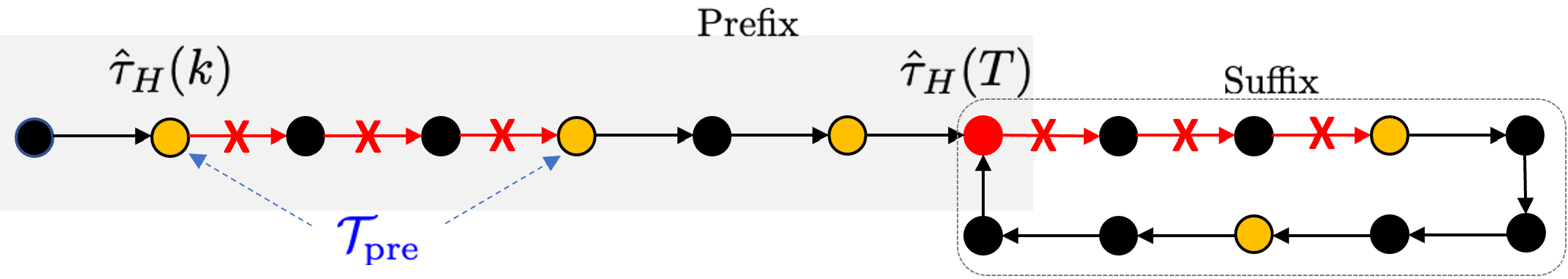}}
  \subfigure[Revised Path]{
    \label{fig:revPath}
  \includegraphics[width=1\linewidth]{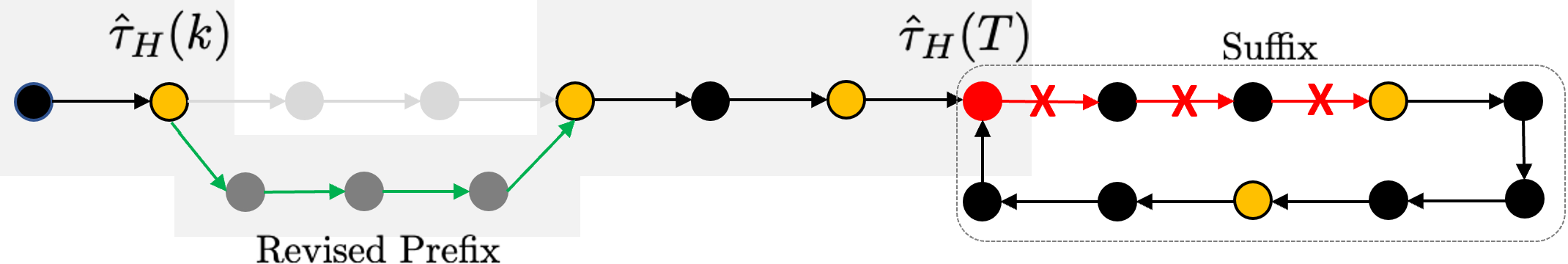}}
  \caption{Example of online revision of the prefix path. The disks capture states in $\hat{\tau}_H$.
  Yellow states $\hat{\tau}_H(k')$ model states for which it holds $q_B(k')\neq q_B(k'-1)$. The red state corresponds to a final state around which a suffix part is designed. The part of $\hat{\tau}_H$ connecting  NBA states $q_B$ and $q_B'$, where $e=(q_B,q_B')\in\ccalE$ (see Alg. \ref{alg:RP}) is marked with a red color and a red `X' denoting that it requires revision. In this example, a path was found when a tree was built rooted at $\ccalT_{\text{pre}}(1)=\hat{\tau}_H(k)$. The computed path connects $\hat{\tau}_H(k)$ to $\ccalT_{\text{pre}}(2)$. The revised prefix part is constructed by replacing the red marked edges in $\hat{\tau}_H(k)$ with the previously computed path. 
  }
  \label{fig:revise}
\end{figure}

\subsection{Algorithm Analysis}\label{sec:analysis}
%
%
In this section, we discuss correctness, completeness and optimality of the proposed algorithm. 

\begin{prop}[Completeness/Optimality of Alg. \ref{alg:bfs}]\label{bfsCompl}
Consider a failed predicate $\pi\in\mathcal{AP}_F$ and Boolean formula $b_{q_B',q_B''}^d$ that contains $\pi$. 
Alg. \ref{alg:bfs} is complete, i.e., if there exists a re-allocation of all predicates that appear in $b_{q_B',q_B''}^d$ (i.e., a path $p$) satisfying the constraints discussed in Section \ref{sec:taskRealloc2}, then Alg. \ref{alg:bfs} will find it. Algorithm \ref{alg:bfs} is also optimal,
i.e., if there exist multiple feasible re-allocations it will find the re-allocation with the minimum number of re-assignments.
\end{prop}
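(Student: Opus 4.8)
The plan is to recast the re-allocation task as a reachability/shortest-path query on a finite directed graph and then invoke the classical invariants of Breadth First Search, taking care of the two non-standard features of Alg.~\ref{alg:bfs} (the root is not pre-labelled as explored, and the Boolean flag $\texttt{Flag}_{\text{root}}$). First I would introduce the \emph{constrained} subgraph $\ccalG'=(\ccalV_{\ccalG},\ccalE_{\ccalG}')$ obtained from $\ccalG$ by keeping only those edges $(a,a')$ that survive the test in line~\ref{bfs:conditions}, i.e.\ for which $g^d_{q_B',q_B''}(a)\notin V^d_{q_B',q_B''}(a')$, where the edges leaving $a_{\text{root}}$ are those induced by the failed skill $c$. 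The central lemma to establish is an equivalence: a re-allocation of all predicates of $b^d_{q_B',q_B''}$ respecting the constraints of Section~\ref{sec:taskRealloc2} exists \emph{iff} there is a path in $\ccalG'$ from $a_{\text{root}}$ to some vertex of $\ccalA$ that is simple on its non-root vertices, and along any such path the number of hops equals the number of re-assignments.

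For the forward direction I would read off, from a feasible path $p=p(0),\dots,p(P)$, that each hop $p(k)\to p(k+1)$ is a legal edge of $\ccalG'$: membership $p(k+1)\in\ccalT_{c}$ (with $c$ the skill of $g^d_{q_B',q_B''}(p(k))$) is exactly the existence of the edge in $\ccalG$, while the re-allocation constraint gives $g^d_{q_B',q_B''}(p(k))\notin V^d_{q_B',q_B''}(p(k+1))$, which is the surviving-edge condition. Simplicity on non-root vertices is forced by the requirement that no robot take over two sub-tasks simultaneously (Section~\ref{sec:PF}). Conversely, any $\ccalG'$-path with distinct non-root vertices ending in $\ccalA$ yields a consistent chain of hand-overs in which every intermediate robot relinquishes exactly one predicate and acquires exactly one, while the terminal robot (being in $\ccalA$) absorbs the last predicate without giving up any; hence feasible re-allocations and admissible $\ccalG'$-paths correspond, with matching cost because $w_{\ccalG}\equiv 1$.

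With this equivalence in hand, completeness follows because BFS over a finite graph enumerates every vertex reachable from the source: the explored-labelling guarantees each non-root vertex enters $\mathcal{Q}$ at most once, so the while-loop terminates, and every edge of $\ccalG'$ leaving a dequeued vertex is examined in line~\ref{bfs:conditions}. Thus if any vertex of $\ccalA$ is reachable in $\ccalG'$ it is eventually dequeued, and by the equivalence reachability of $\ccalA$ is precisely the hypothesis that a valid re-allocation exists. Here I would explicitly dispatch the two custom features: leaving $a_{\text{root}}$ unlabelled lets the source be re-entered through an incoming edge, which is the only way the robot with the failed skill can itself take over a residual sub-task, so no solution of that form is lost; and $\texttt{Flag}_{\text{root}}$ merely suppresses the spurious acceptance of the length-$0$ ``path'' at the initial pop of $a_{\text{root}}$ (which trivially satisfies $g^d_{q_B',q_B''}(a_{\text{root}})=\varnothing$) without blocking any genuine, length-$\ge 1$ path, since the flag is cleared in line~\ref{bfs:falgSetFalse} immediately after that first pop.

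Optimality is then the standard unit-cost BFS argument: with a FIFO queue and unit edge weights, vertices are dequeued in non-decreasing order of hop-distance from $a_{\text{root}}$, and the \texttt{Parent} pointers encode a shortest-path tree, so the \emph{first} vertex of $\ccalA$ dequeued with $\texttt{Flag}_{\text{root}}$ cleared (line~\ref{bfs:terminate}) is at minimum hop-distance among all $\ccalA$-vertices, and the reconstructed path (line~\ref{bfs:return}) is minimum-hop; by the equivalence this is exactly a re-allocation with the fewest re-assignments. I would note that the source can only be re-entered at distance $\ge 2$, so the flag never pre-empts a strictly closer $\ccalA$-vertex. The step I expect to be the crux is the equivalence lemma itself---specifically, showing that the \emph{edge-local} test $g^d_{q_B',q_B''}(\cdot)\notin V^d_{q_B',q_B''}(\cdot)$ enforced hop-by-hop is equivalent to the \emph{global} feasibility requirement of Section~\ref{sec:taskRealloc2} that the revised $b^d_{q_B',q_B''}$ remain satisfiable, and that the explored-labelling faithfully encodes the one-skill-at-a-time constraint; the remaining BFS bookkeeping is routine.
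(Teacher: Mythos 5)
Your proof is correct and takes essentially the same route as the paper's, whose entire argument is a short appeal to the completeness and optimality of BFS over the re-assignment graph $\ccalG$ under the constraints of Section \ref{sec:taskRealloc2}; your constrained subgraph, the path/re-allocation equivalence lemma, and the treatment of the unlabelled root and of $\texttt{Flag}_{\text{root}}$ simply make that appeal rigorous. The only caveat is that the ``crux'' you flag at the end is not actually required: the proposition asks only that the returned re-allocation respect the $V^d_{q_B',q_B''}$-constraints (which your hop-by-hop edge test enforces by construction), not that the revised $b^d_{q_B',q_B''}$ or the resulting LTL formula remain satisfiable---a guarantee the paper explicitly disclaims in Remark \ref{rem:limitations}.
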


\begin{proof} 
This result is due to completeness and optimality properties of the BFS algorithm. The graph $\ccalG$ captures all possible re-assignments by construction. The proposed algorithm relies on BFS to find a path $p$ in this graph, modeling a sequence of task re-assignments that respects the constraints discussed in Section \ref{sec:taskRealloc2}. Since BFS is complete and optimal, so is Algorithm \ref{alg:bfs} with respect to fixing a Boolean formula $b_{q_B',q_B''}^d$.
\end{proof}




\begin{prop}[Completeness/Optimality of Alg. \ref{alg:RP}]\label{prop:ComplOpt}
Consider a failed predicate $\pi\in\mathcal{AP}_F$ and a set of NBA edges $\ccalE$ whose activation depends on $\pi$. If there exists feasible re-allocations (i.e., paths $p$) of all predicates for all Boolean formulas $b_{q_B',q_B''}^d$ for a given edge $e\in\ccalE$ and for all $e\in\ccalE$, then Alg. \ref{alg:RP} will find them. It will also find the solution with the minimum number of re-assignments across all sub-formulas and edges.
%
\end{prop}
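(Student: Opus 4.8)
The plan is to lift the single-sub-formula guarantee of Proposition \ref{bfsCompl} to the full algorithm by exploiting the fact that, for a fixed failed predicate $\pi$, Alg. \ref{alg:RP} decomposes the repair task into a family of \emph{independent} sub-problems — one for each pair $(e,d)$ consisting of an edge $e=(q_B',q_B'')\in\ccalE$ and a disjunct $b_{q_B',q_B''}^d$ in the DNF expansion of $b_{q_B',q_B''}$ (lines \ref{rp:for2}--\ref{rp:for3}) — and dispatches each sub-problem to Alg. \ref{alg:bfs}. I would first establish completeness. By Remark \ref{rem:independ}, the occurrences of $\pi$ in different Boolean formulas may be re-allocated independently, so the joint re-allocation splits into the sub-problems indexed by $(e,d)$, and a joint feasible re-allocation exists if and only if each sub-problem is individually feasible. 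Under the hypothesis of the proposition every such sub-problem admits a feasible path $p$; hence, when Alg. \ref{alg:RP} invokes Alg. \ref{alg:bfs} on the $(e,d)$-th sub-problem (line \ref{rp:applyBFS}), Proposition \ref{bfsCompl} guarantees that a feasible $p$ is returned, so the branch that sets $\pi=\text{False}$ (line \ref{rp:no_sol}) is never taken. Iterating over all $e\in\ccalE$ and all $d\in\{1,\dots,D\}$ therefore yields feasible re-allocations for every sub-formula, which is exactly what completeness demands.

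For optimality I would rely on separability of the objective. The total number of re-assignments produced by Alg. \ref{alg:RP} is the sum, over all pairs $(e,d)$, of the lengths of the paths returned by Alg. \ref{alg:bfs}. Because each call to Alg. \ref{alg:bfs} operates on its own graph $\ccalG$ built solely from the teams $\ccalT_c(t)$ and the local functions $V^{d}_{q_B',q_B''}$ and $g^{d}_{q_B',q_B''}$ (line \ref{rp:graph}), the sub-problems share no decision variables, so the feasible set of the joint problem is the Cartesian product of the per-sub-problem feasible sets. Minimizing an additive objective over such a product set is achieved by minimizing each term independently, and Proposition \ref{bfsCompl} certifies that Alg. \ref{alg:bfs} returns the minimum-length path for each $(e,d)$; summing these per-sub-problem optima therefore gives the global minimum across all sub-formulas and edges.

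The step I expect to be the main obstacle is justifying the independence/separability claim rigorously, i.e., arguing that solving the sub-problems in isolation cannot miss a cheaper jointly-coordinated re-allocation. This rests entirely on Remark \ref{rem:independ}: since each textual occurrence of $\pi$ may be fixed by a different robot, there is no coupling constraint forcing a common re-assignment across sub-formulas, and hence no joint solution can outperform the component-wise optima. Once this is recorded, the remainder is the elementary ``minimum of a sum over a product set equals the sum of the minima'' argument, and the proposition follows from Proposition \ref{bfsCompl} applied to each $(e,d)$.
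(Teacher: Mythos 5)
Your proposal is correct and follows essentially the same route as the paper: both decompose the repair into independent sub-problems indexed by (edge, DNF disjunct), argue that the constraints and decision variables are fully decoupled across these sub-problems (you ground this in Remark \ref{rem:independ} and the locality of $V^{d}_{q_B',q_B''}$, $g^{d}_{q_B',q_B''}$; the paper states it as independence of the $V^{d}_{q_B',q_B''}$ constraints), and then invoke Proposition \ref{bfsCompl} on each sub-problem. Your write-up merely makes explicit the separability argument for optimality (minimum of an additive objective over a product feasible set equals the sum of the per-component minima) that the paper leaves implicit.
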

\begin{proof}
Notice that given a failed predicate, all edges $e$ are repaired independently by Alg. \ref{alg:RP}. The reason is that the constraints imposed by the function $V_{q_B',q_B''}^d$ in Section \ref{sec:taskRealloc2} on repairing the failed predicates in $b_{q_B',q_B''}^d$ are fully independent across the sub-formulas $b_{q_B',q_B''}^d$ for a given edge and across edges as well. Then, the result holds due to Proposition \ref{bfsCompl}.
\end{proof}

\begin{prop}[Soundness/Completeness of Replanning]
Consider a re-allocation of predicates to robots by Alg. \ref{alg:RP}. If the re-planning algorithm (Section \ref{sec:onlineReplan}) returns a new prefix-suffix plan, then this plan satisfies $\phi$. Also, if there exists a feasible prefix-suffix plan satisfying the revised formula, then the re-planning algorithm will find it.
\end{prop}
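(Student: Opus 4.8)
The plan is to prove the two claims—soundness and completeness—separately, in each case reducing to the corresponding property of the sampling-based planner of \cite{luo2021abstraction} and exploiting the locality of the revision in Section \ref{sec:onlineReplan}. Throughout I would use that the revised NBA output by Alg. \ref{alg:RP} is precisely the automaton of the revised LTL formula, so that a plan satisfies the (revised) $\phi$ if and only if it induces an accepting run of this NBA.

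For soundness, I would decompose any returned prefix-suffix plan into \emph{kept} segments (those of $\hat{\tau}_H$ that survive the revision) and \emph{revised} segments (those resampled by the planner). The crucial structural fact is that Alg. \ref{alg:RP} alters only the Boolean labels $b_{q_B',q_B''}$ of edges $e\in\ccalE$; hence every transition lying outside $\ccalE$—and therefore every kept segment—remains feasible under the revised NBA. Each revised segment is generated by running the planner on the revised NBA, so it is feasible by soundness of \cite{luo2021abstraction}. I would then verify that the splice is consistent: since the prefix goal region $\ccalR_{\text{pre}}^{\text{rev}}$ contains the actual boundary states $\ccalT_{\text{pre}}(s)=[\bbp(s),\bbs(s),q_B(s)]$ as well as $\bbq(T)$ and the final set $\ccalR_{\text{pre}}$, whenever a revised segment ends at such a state it matches both the physical and the NBA component of the kept continuation, so successive NBA states along the glued run form valid transitions. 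Finally, the prefix is required to terminate at a final NBA state (either $\bbq(T)\in\ccalQ_B^F$ or a state of $\ccalR_{\text{pre}}$) and the suffix is built as a cycle around that state exactly as in Section \ref{sec:samplingAlg}; thus $\text{Inf}(\rho_B)\cap\ccalQ_B^F\neq\emptyset$, the run is accepting, and the plan satisfies the revised $\phi$.

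For completeness I would invoke the global fallback built into the algorithm. Local revision is attempted first, but whenever it fails to return a prefix (or a suffix cycle), the algorithm triggers re-planning from scratch via the planner of \cite{luo2021abstraction}, rooted at the current state $\bbp(t)$ and NBA state $q_B(t)$ and run against the revised NBA. Since that planner is (probabilistically) complete, the existence of any feasible prefix-suffix plan for the revised formula guarantees that the global stage—and hence the overall algorithm—recovers one.

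The hard part will be the soundness of the splice rather than the completeness direction: one must argue rigorously that concatenating kept transitions (those outside $\ccalE$) with resampled segments yields a single well-defined run of the revised NBA whose NBA states agree at every boundary, and that this remains true across the repeated reconnection rounds indexed by $r$ in Section \ref{sec:onlineReplan}, i.e. when the planner reaches an intermediate boundary $\ccalT_{\text{pre}}(s)$ rather than a final state and a fresh tree is rooted there. Completeness, by contrast, is essentially immediate once the global re-planning fallback is used, provided it is understood in the probabilistic sense inherited from \cite{luo2021abstraction}.
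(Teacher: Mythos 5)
Your proposal is correct and takes essentially the same route as the paper: soundness follows because the returned plan induces a valid run of the revised NBA whose infinitely repeated suffix contains a final state, and completeness reduces to the (probabilistic) completeness of the from-scratch re-planning fallback of \cite{luo2021abstraction}. The only difference is level of detail---your kept/revised-segment decomposition and splice-consistency check carefully unpack what the paper's proof compresses into ``by construction,'' which is a welcome tightening but not a different argument.
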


\begin{proof} 
The online designed plan $\tau_H$ satisfies $\phi$ since, by construction, its suffix part includes a state $\bbq=(\bbp,\bbs,q_B)$ where $q_B$ is a final NBA state. As a result, the word generated along the plan satisfies the NBA accepting condition. 
Also, if there exists a feasible plan $\tau_H$, the proposed algorithm will find it. The reason is that in the worst-case scenario, where the current plan cannot be locally revised, a new prefix suffix plan is constructed from scratch by building a new tree using the sampling-based planner \cite{luo2021abstraction}; see Section \ref{sec:onlineReplan}. The employed sampling-based planner is (probabilistically) complete and, therefore, in this worst-case scenario, if there exists a feasible prefix-suffix plan $\tau_H$, then \cite{luo2021abstraction} will find it, completing the proof.
\end{proof}

\begin{rem}[Limitations]\label{rem:limitations}
%
%
The constraints defined in Section \ref{sec:taskRealloc2} cannot ensure the feasibility of the revised LTL formula. In fact, satisfying these constraints is necessary, but not sufficient, to ensure the existence of robot paths that can result in accepting NBA runs. We note that this is also a common limitation in related task allocation algorithms.
Additional  constraints can be introduced in the task re-allocation process, modeling necessary conditions to ensure that a transition is physically realizable; see e.g., Def. 3.4 in \cite{kantaros2020reactive}. 
%
\end{rem}

\section{Experimental Validation} \label{sec:Sim}
In this section, we provide experiments to demonstrate the performance of the proposed algorithm in the presence of unexpected multiple failures.
 We conducted our experiments in python 3 on a computer with Intel Core i7-8565U 1.80GHz and 16Gb RAM.  \textcolor{black}{Videos can be found in \cite{SimResilient}}. 



\subsection{Pipeline Inspection Task - Single Failure}\label{sec:sim3robot}
We revisit the pipeline inspection task discussed in Ex. \ref{ex:setup}. The considered LTL mission corresponds to an NBA with $8$ states.
\textcolor{black}{Due to failure of the camera for robot $3$, $\pi_2$, that was originally assigned to robot $3$, cannot be satisfied. At this point, Algorithm \ref{alg:RP} is called to fix all NBA edges associated with $\pi_2$. The total number of affected edges is $6$. In all edges, $\pi_2$ was assigned to robot $2$. 
The time needed for this reassignment is $0.0009$ secs and the new paths are generated in $0.26$ secs.} Observe in Fig. \ref{fig:3rd} robot $2$ completed its original task of operating the valve at $\ell_2$ and then took over the role of robot $3$ by visiting $\ell_3$ to take a photo. \textcolor{black}{The results for the case study discussed in Ex. \ref{ex:LTL} can be found in \cite{SimResilient}.}

\subsection{Aerial sensing task - Multiple failures}\label{sec:aerialGazebo}
We consider a mission involving $N=7$ drones with dynamics as in \cite{Furrer2016}. 
The abilities of the drones are defined as $c_1$, $c_2$, $c_3$, and $c_4$ pertaining to mobility, data transmission capability, photo-taking skills using a camera, and infrared imaging, respectively.
%
Drones $1$, $2$ and $3$ have abilities $c_1$ and $c_2$. Drones $4$ and $5$ have abilities $c_1$, $c_2$, and $c_3$, and drones $6$ and $7$ have abilities $c_1$, $c_3$, $c_4$. 
The drones are responsible for accomplishing an aerial sensing task while transmitting the collected data to a base station. 

Specifically, each of drones $1$, $2$, and $3$ should  \textcolor{black}{eventually always be present at locations $\ell_1$, $\ell_2$, and $\ell_3$, respectively,} to be able to transmit data collected from other drones to a base station (being close to $\ell_1$) in a multi-hop fashion.
Each of the remaining drones needs to complete two tasks one after the other. Drones $4$ and $5$ have to take photos at two locations each, while drones $5$ and $6$ need to take infrared locations at two locations each. This mission is captured by the following formula: $ \phi =\Diamond(\square\xi_1\wedge\pi_1\wedge\Diamond(\pi_2))\wedge\Diamond(\square\xi_1\wedge\pi_3\wedge\Diamond(\pi_4))\wedge\Diamond(\square\xi_1\wedge\pi_5\wedge\Diamond(\pi_6))\wedge\Diamond(\square\xi_1\wedge\pi_7\wedge\Diamond(\pi_8))$,
where $\xi_1$ is a Boolean formula \textcolor{black}{modeling the transmission objective} defined as 
$\xi_1 =  \pi_{\ccalT_2}(1, c_2, \ell_1)\wedge\pi_{\ccalT_2}(2, c_2, \ell_2)\wedge\pi_{\ccalT_2}(3, c_2, \ell_3)$, and $\pi_1=\pi_{\ccalT_3}(4, c_3, \ell_4)$,
$\pi_2=\pi_{\ccalT_3}(4, c_3, \ell_5)$,
$\pi_3=\pi_{\ccalT_3}(5, c_3, \ell_6)$,
$\pi_4=\pi_{\ccalT_3}(5, c_3, \ell_7)$,
$\pi_5=\pi_{\ccalT_4}(6, c_4, \ell_8)$,
$\pi_6=\pi_{\ccalT_4}(6, c_4, \ell_9)$,
$\pi_7=\pi_{\ccalT_4}(7, c_4, \ell_{10})$,
$\pi_8=\pi_{\ccalT_4}(7, c_4, \ell_{11})$.
This mission corresponds to an NBA with $132$ states.

We simulate the failure of all skills of two drones $2$ and $4$ at $t=9$ (when the robots are currently in the initial NBA state), requiring reassignment of their associated atomic predicates $\phi$. Due to this, $|\ccalE|=892$ NBA edges need to be repaired by Algorithm \ref{alg:RP}.
%
Algorithm \ref{alg:RP} reassigns drone $2$'s task of transmission to drone $5$ and drone $4$ and $5$'s original photographing tasks to drones $6$ and $7$, respectively. This reassignment process took $2.28$ secs 
and planning the new paths took $1.1$ secs. Toward the end of the mission, at $t=58$, robot $7$ failed completely as well. In this case, Algorithm \ref{alg:RP} had to fix one failed predicate $\pi_2$ (which was originally assigned to robot 4 and later taken over by robot $7$) 
and one edge. 
Alg. \ref{alg:RP} reassigns this predicate to robot $6$ in $0.0013$ secs. We note here that since the total number of NBA states is large ($132$), a failure in the initial part of the mission (when the robots are in the initial NBA state) leads to a large set $\hat{\ccalQ}_B^{\text{cur}}$ (see  Section \ref{sec:taskRealloc0}) and, consequently to large 
set $\ccalE$ of NBA edges that need to be repaired. As a result, the runtime required to fix the failed predicates in that case requires more time compared to failures toward the end of the mission where only a few edges needed to be fixed. A screenshot of the simulation is shown in Fig. \ref{fig:Gazebo_fig}.

\begin{figure}[t]
  \centering
\includegraphics[width=1\linewidth]{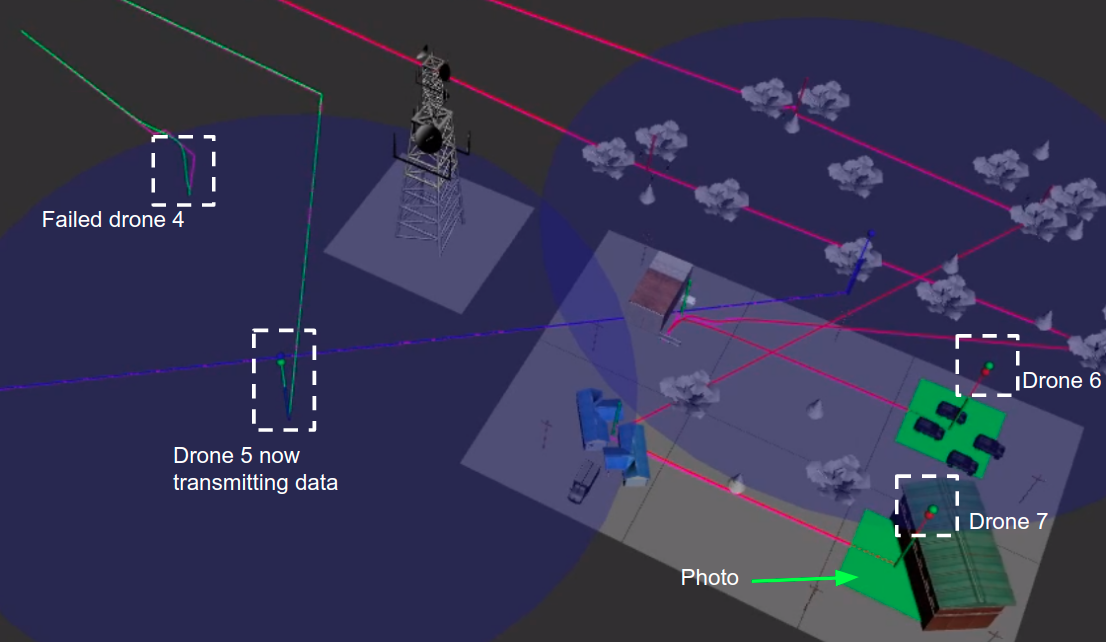}
\caption{\textcolor{black}{Drone $5$ performs a transmission task (blue circles denote transmission range) originally assigned to the failed drone $4$. Drones $6$ \& $7$ take photos completing tasks initially assigned to the failed drones $4$ \& $5$. 
}}
\label{fig:Gazebo_fig}
\end{figure}

\subsection{Failures in Large Robot Teams}
Finally, we consider a team of $N=24$ ground robots with $|\ccalC|=6$ skills. These skills are related to mobility, shutting off valves, extinguishing fires, collecting samples, and taking thermal and normal images.
The robots reside in a factory after a disaster and  they need to accomplish a sequence of various tasks to bring the situation under control as captured in: $\phi=\Diamond(\xi_{1}\wedge\Diamond(\xi_{2}\wedge\Diamond\xi_{3}))\wedge\Diamond(\xi_{4}\wedge\Diamond(\xi_{5}\wedge\Diamond\xi_{6}))$,
where each $\xi_i$ is a Boolean formula defined over atomic predicates of the form \eqref{eq:pip} for various robots as in Section \ref{sec:aerialGazebo}.
The considered formula corresponds to an NBA with $25$ states. At time $t=20$, all skills of $8$ robots completely failed requiring to fix $|\ccalE|=19$ NBA edges. The re-allocation and the re-planning process took $0.004$ secs and $17$ secs, respectively. 
\section{Conclusion} \label{sec:Concl}
This paper proposed a temporal logic mission planning algorithm for heterogeneous robot teams that is resilient against failures of robot capabilities. The proposed algorithm aims was supported theoretically and experimentally. 

\bibliographystyle{IEEEtran}
\bibliography{SK_bib.bib}

\end{document}